\newtheorem{theorem}{Theorem}[section]
\newtheorem{proposition}[theorem]{Proposition}
\numberwithin{equation}{section}
\begin{document}

\title[Stochastic metamorphosis with template uncertainties]{Stochastic metamorphosis with template uncertainties}

\author[A. Arnaudon]{Alexis Arnaudon}
\author[D. Holm]{Darryl D Holm}
\author[S. Sommer]{Stefan Sommer}

\address{AA, DH: Department of Mathematics, Imperial College, London SW7 2AZ, UK}
\address{SS: Department of Computer Science (DIKU), University of Copenhagen,
  DK-2100 Copenhagen E, Denmark}

\subjclass[2010]{60G99,70H99,65C30}
\maketitle

\begin{abstract}
     In this paper, we investigate two stochastic perturbations of the metamorphosis equations of image analysis, in the geometrical context of the Euler-Poincar\'e theory. 
    In the metamorphosis of images, the Lie group of diffeomorphisms deforms a template image that is undergoing its own internal dynamics as it deforms. 
    This type of deformation allows more freedom for image matching and has analogies with complex fluids when the template properties are regarded as  order parameters (coset spaces of broken symmetries). 
    The first stochastic perturbation we consider corresponds to uncertainty due to random errors in the reconstruction of the deformation map from its vector field. We also consider a second stochastic perturbation, which compounds the uncertainty in of the deformation map with the uncertainty in the reconstruction of the template position from its velocity field. 
    We apply this general geometric theory to several classical examples, including landmarks, images, and closed curves, and we discuss its use for functional data analysis. 
\end{abstract}

\setcounter{tocdepth}{2}


\section{Introduction}
Variability in shapes can be modelled using flows of the group $G$ of diffeomorphic deformations of the ambient domain $\Omega$ in which the shape is embedded. This is the basis of the large deformation diffeomorphic metric mapping  (LDDMM) framework, see \cite{trouve_infinite_1995,christensen_deformable_1996,dupuis_variational_1998,beg2005computing}.
In the LDDMM approach, the shape of an embedded template image $\eta\in N$ in the manifold of embedded shapes ${\rm Emb}(N,\Omega)$ changes via the action $g_t.\eta$ of time-dependent diffeomorphisms $g_t\in G$ on $\eta\in N$, through the action of $g_t$ on the domain $\Omega$. The metamorphosis extension \cite{holm2009euler,miller2001group,trouve2005local,trouve2005metamorphoses} of LDDMM introduces a further time-dependent variation $\eta_t$ of the template to model the combined dynamics $g_t.\eta_t$. 

In this paper, we combine the geometrical metamorphosis framework  of \cite{holm2009euler} with 
recent developments in stochastically perturbed Euler-Poincar\'e dynamics in
fluid dynamics and shape analysis
\cite{holm2015variational,arnaudon2017geometric,arnaudon2016noise}, to model
evolutions of both shape and template under stochastic perturbations.
The resulting framework allows modelling of random evolutions of shape and
template simultaneously. A potential application of such an evolution is in modelling  the progression of disease using computational anatomy, in which the model would address the analysis of disease progression in both the population average and in the individual. From longitudinal image data, mean
evolutions over the population can be inferred. While average template
evolutions can be modelled deterministically, models for the dynamics of each individual subject that include stochastic uncertainty are arguably more realistic
than models supporting only smooth deterministic trajectories. The stochastic metamorphosis model
includes such non-smooth and non-deterministic variations by incorporating stochastic perturbations in shape and template
simultaneously. We detail this application and outline further areas of applications where
similar generative models of data appear, particularly in the combined modelling
of phase and amplitude variation in functional data analysis.

\begin{figure}[!th]
\centering
  \vspace{.4cm}
\mbox{
\def\svgwidth{0.6\columnwidth}
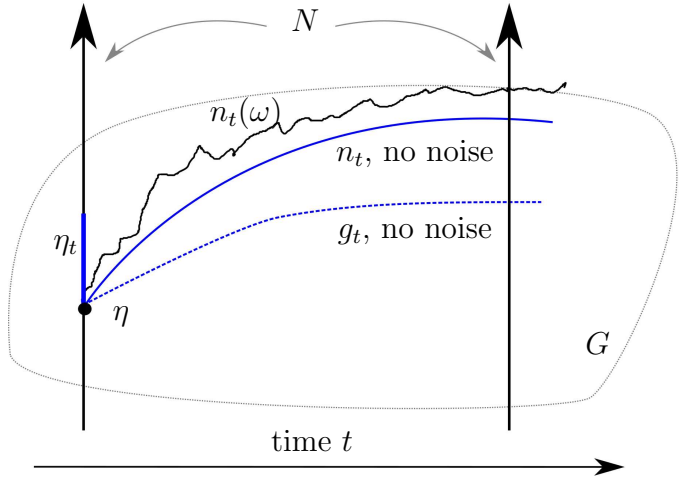
}
\caption{Sketch of the deformation group $G$, the shape space $N$ (vertical
  arrows), evolution of the deformation variable $g_t$, template variable
  $\eta_t$, and shape variable
  $n_t$ without noise ($W_t=0$, blue), and shape variable $n_t$ with noise
  $\omega$ (black).
  The shape space is illustrated as being linear (e.g. landmarks, images).
  However, the framework applies to general non-linear shape spaces (e.g. curves,
  tensor fields).
}
\label{fig:sto_metamorphosis_sketch}
\end{figure}

\subsection{Background}

The LDDMM framework models the change of a shape $\eta\in N$ by the action of time dependent flows of diffeomorphisms $g_t\in G$ on the embedding space $\Omega$. One lifts the shape trajectory 
to a time-dependent curve $g_t$ on the diffeomorphisms by setting $n_t=g_t.\eta\in N$. 
For a right invariant metric on the tangent space of a subgroup $G$ of the diffeomorphism group $\mathrm{Diff}(N)$, $N$ being the shape
space, an energy can be defined as
$E(g_t)=\int_0^1\|\partial_t g_t\|_{g_t}^2{\rm d}t$. Combined with a data
attachment term, this approach allows matching of
shapes and image registration \cite{beg2005computing}. The invariance of $E(g_t)$ under the right action of $G$ implies that the metric descends to a metric structure on the data space $N$ itself. The action of $g_t$ differs between data types, but otherwise, the framework is formally equivalent for different classes of shapes. 
The use of the flows $g_t$ to model the shape variability is fundamental and the right trivialization $v_t:=\partial_t g_t\circ g_t^{-1}$ gives an Eulerian
interpretation of the metric. The right invariance of the metric enables Euler-Poincar\'e reduction of the dynamics to the Lie algebra of $G$ to be performed, and the critical paths for $E$ appear from the reduced dynamics.

Metamorphosis extends the LDDMM setting by letting the template vary in time
as well as the deformation, thereby resulting in the flow $g_t.\eta_t$, in which $\eta_0=\eta$ is the original template. The metamorphosis energy
is encoded into a Lagrangian depending on both the $G$ and $N$ variability, again assuming
invariance of the energy to the group action on both $G$ and $N$. 
A particular example of metamorphosis dynamics arises in image analysis,
where the image $I_t$ changes both by deformation via the right action
$g_t.I_t=I_t\circ g_t^{-1}$ and via a pointwise change $\partial_t I_t(x)$ for
each pixel/voxel $x$. 

In this formulation of metamorphosis dynamics, an analogy with the flows of complex fluids arises. In complex fluids, a diffeomorphic flow carries an order parameter, defined as a coset space for a broken symmetry of homogeneous fluids, on which the diffeomorphisms act. The order parameter moves with the fluid, but it can also have its own internal dynamics, which in turn is coupled to the fluid motion \cite{holm2002euler, gay2009geometric}. 
A similar combined dynamics of shape and template also appears in the Fshape framework
\cite{charlier_fshape_2017}.

In \cite{arnaudon2017geometric,arnaudon2016noise}, a stochastic model of shape
evolution was introduced that preserves the Euler-Poincar\'e theory of the
deterministic LDDMM framework. The model is based on the stochastic fluid dynamics
model \cite{holm2015variational} where right-invariant noise is introduced
to perturb the reconstruction equation that evolves the flow from the reduced
dynamics. In deterministic LDDMM, the reconstruction equation specifies the evolution
of the group element by $\partial_tg_t=v_t\circ g_t$ generated by the reduced Eulerian
velocity vector field $v_t$. Stochasticity is
introduced as a perturbation to the reconstruction equation, by introducing the stochastic
time differential 
\begin{equation}
  {\rm d}g_t \,g_t^{-1}=v_t{\rm d}t+\sum_{l=1}^N\sigma_l\circ {\rm d}W_t^l
\label{circ-diff}
\,.\end{equation}
Here $W_t^l$ are standard Wiener processes and $\sigma_l$ are vector fields on the data domain which characterize the spatial correlation of the noise. As it turns out, the noise in Stratonovich form is denoted conventionally with the same symbol $(\circ)$ that denotes composition of maps. This coincidence should not cause any confusion. However, just to be sure, we will write composition of maps as concatenation whenever the two meanings appear in the same equation, as in \eqref{circ-diff}.
The perturbation of the reduced variable implies that the noise is right-invariant
and in a certain sense compatible with the right-invariant LDDMM metric. This approach
preserves many of the geometric structures of the deterministic framework.
Importantly, the descent of the stochastic model to particular data types is 
similar to the way the metric descends in the deterministic LDDMM framework.

A stochastic metamorphosis extension of the stochastic Euler-Poincar\'e
framework was introduced in \cite{holm2017stochastic2}. The stochastic perturbations there were also introduced in the reduced variable influencing the deformation flow from the reconstruction equation. The template evolution
$\eta_t$ is still deterministic. The aim of the present paper is to extend this
model to include noise in the template evolution $\eta_t$ as well. We will make 
this extension on the reduced template velocity $g_t\partial_t\eta_t$
similarly to the perturbation of the group variable. This procedure results in simultaneous stochastic
perturbations of the flow equations for both $g_t$ and $\eta_t$. 

\subsection{Paper outline}
After a brief survey of the deterministic metamorphosis framework in section \eqref{det-review}, we formally derive 
the stochastic model in section \eqref{formal-sto}. 
We then show in section \eqref{HP-derivation} how to derive these equations in the Hamilton-Pontryagin formulation, where the noise appears as a stochastic constraint in the variational principle. 
We end the theoretical section by deriving the corresponding Hamiltonian stochastic equations in \eqref{ham-form} to then move to some classical examples of image analysis and computational anatomy in section \eqref{applications}, including landmarks and images.
The inclusion of two types of stochastic variations links the framework to combined random phase and
amplitude variations in functional data analysis. We provide perspectives of the
method to future applications in functional data analysis and computational
anatomy in section \eqref{perspective}.

\section{General stochastic metamorphosis}

In this section, we introduce the stochastic deformation of metamorphosis, but first, we recall the basis of this theory, in the context of reduction by symmetry. 
We will only review what will be needed for our exposition, and we refer to \cite{holm2009euler} and \cite{holm2017stochastic2} for more extensive treatments. 

\subsection{Deterministic metamorphosis}\label{det-review}

The theory of metamorphosis begins with a template $N$, considered here as a manifold (landmarks, images, etc\dots) upon which a group of diffeomorphism $G= \mathrm{Diff}(N)$ acts. 
The parameter space of this theory is $G\times N$, with curves $(g_t,\eta_t)\in G\times N$, where $g_t$ is the deformation curve and $\eta_t$ is the template curve. 
The image curve will be denoted $n_t = g_t.\eta_t\in N$, where the dot represents the group action. 
This curve is the total motion of the template, or image $N$, under both the deformation and its own dynamics. 
For standard LDDMM, the motion of the image is only $n_t= g_t.\eta$, for a fixed reference template $\eta$. 
This combined action thus allows more freedom in the matching procedure, while remaining compatible with the theory of reduction by symmetries, which we now describe.  
We first define the two reduced velocity fields
\begin{align}\label{2red-vel-VFs}
  u_t := \dot g_t g_t^{-1}
  \,,\quad\hbox{and}\quad
  \nu_t : = g_t \dot \eta_t\,.
\end{align}
The first is the reduced deformation velocity and the second is the reduced template velocity. 
We then assume that the original Lagrangian of this theory is invariant under the group action of $G$, so that we may write the reduced Lagrangian in terms of the reduced velocity fields and the image position $n_t$, i.e.,
\begin{align}
      L(g_t, \dot g_t, \eta_t, \dot \eta_t) = l(u_t, n_t, \nu_t)\, . 
\end{align}
Because the reduced Lagrangian still depends on the template variable $n_t$, reduction by the action of the diffeomorphisms will result in a semi-direct product structure, where the template is an advected quantity, in the language of fluid dynamics. 

We next compute the variations of the three variables in the reduced Lagrangian, upon introducing the notation $\xi_t = \delta g_t g_t ^{-1}$ and $\omega = g\delta \eta$, where $\delta g$ and $\delta \eta$ are free variations, to obtain  
\begin{align}
\begin{split}
    \delta u &= \dot \xi_t - [u_t, \xi_t] \,,\\
    \delta n &= \omega_t + \xi_t \eta_t \,,\\
    \delta \nu &= \dot \omega_t + \xi_t \nu_t - u_t \omega_t\, . 
\end{split}    
\end{align}
In these formulas, we need to specify what we mean by the multiplication, as the vector fields live in different spaces. 
In fact, $u_t,\xi_t\in \mathfrak g= \mathfrak X(N)$ are vector fields; so the Lie bracket is the natural operation. 
Recall that $\eta_t\in N$, thus $\xi_t \eta_t$ corresponds to the tangent map of the action of $G$ on the manifold $N$, and similarly for $\nu\in TN$, where the action is on the tangent space of $N$. 
We do not need these actions explicitly now, but we will need their `adjoint action' in the following sense:
\begin{align}
  \langle n^*\diamond m, u\rangle_\mathfrak{g} = - \langle n^*, un\rangle_{N}
  \,,\\
  \langle u\star \nu^*, \nu \rangle_{TN} =  \langle \nu^*, u\nu \rangle_{TN}\,,
  \label{ad-star-def}
\end{align}
where $N\in N$, $n^*\in N^*$, $\nu \in TN$, $\nu^*\in T^*N$ and $u\in \mathfrak g$. 
The first equality defines the diamond operation $(\diamond)$, which will serve as a force term to capture the coupling between the advected quantity $n$ and the main dynamics of the diffeomorphism group. The second equality defines the star operation $(\star)$, which is the adjoint of the action of $u$ on $TN$. That is, it defines the action of $u$ on $T^*N$. 

Applying the variational calculus to the action $S= \int l {\rm d}t$, we obtain the Euler-Poincar\'e formulation of the metamorphosis equation in the form
\begin{align}
  \begin{split}
    \frac{{\rm d}}{{\rm d}t}\frac{\delta l}{\delta u} &+ \mathrm{ad}^*_{ u_t} \frac{\delta l}{\delta u} + \frac{\delta l}{\delta n}\diamond n +  \frac{\delta l}{\delta \nu}\diamond \nu = 0 \,,\\
    \frac{{\rm d}}{{\rm d}t} \frac{\delta l}{\delta \nu} &+ u_t \star \frac{\delta l}{\delta \nu} - \frac{\delta l}{\delta n}= 0\, , 
  \end{split}
    \label{EP-metamorpho}
\end{align}
together with the reconstruction equation
\begin{align}
    \dot n &= u_t n_t + \nu_t\,  . 
    \label{reconstruction-relation}
\end{align}
We refer to \cite{holm2009euler, holm2017stochastic2} for the details of this derivation, which we will do in the context of Hamilton-Pontryagin with noise in the next section. 

From here, a choice of Lagrangian and data $N$ will reduce the system to particular cases, some of which we discuss in the applications section \ref{applications}. 

\subsection{Formal derivation of the stochastic equations}\label{formal-sto}

We will first derive the equation informally, using `stochastic variations', then show a more straightforward derivation using the Hamilton-Pontryagin principle. 
The second derivation also has the advantage of revealing the effects of the noise more transparently. 

In order to introduce a noise compatible with the Euler-Poincar\'e equation, we need to perturb the theory at its core, which is in this case the definition of the reduced velocities in \eqref{2red-vel-VFs}. 
Indeed, the variations were computed from these definitions, and the deterministic Euler-Poincar\'e equation emerged. 
Although a single relation is used in the Euler-Poincar\'e equation \eqref{reconstruction-relation}, we will split it into two parts, and perturb them with two different noise components as follows,
\begin{align}
  \begin{split}
    {\rm d}g_t g_t^{-1} &= u_t(x) {\rm d}t + \sum_{l=1}^{K^u} \sigma^u_l(x) \circ {\rm d}W_t^l =: {\rm d} u_t(x)
  \,,\\
  g_t {\rm d}\eta &= \nu_t {\rm d}t + \sum_{k=1}^{K^\nu} \sigma^\nu_k \circ {\rm d}W_t^k =: {\rm d} \nu_t\, . 
  \end{split}
  \label{sto-reconstruction}
\end{align}
In a slight abuse of notation, ${\rm d} u_t(x)$ and ${\rm d} \nu_t$ are written as stochastic processes. 
Here $\sigma_l^u:N\to \mathfrak g$ are a set of $K^u$ vector fields on the domain $\Omega$, and $\sigma^\nu_l\in TN$ are another set of $K^\nu$ tangent vectors on the template. 
We also denote by $W_t^l$ or $W_t^k$ the $K^u+ N^\nu$ independent standard Weiner processes. 
In addition, we denote by $x_0\in \Omega$ the Lagrangian labels upon which $g_t$ acts, so that the first equation can be written equivalently as 
\begin{align*}
  {\rm d}g_t &= u_t(g_tx_0) {\rm d}t 
  + \sum_{l=1}^{K^u} \sigma^u_l(g_t x_0) \circ {\rm d}W_t^l\, . 
\end{align*}
The second equation for $\eta$ in \eqref{sto-reconstruction} does not have any $x_0$ dependence, as it is an equation for the template itself. Thus, $\sigma_k^\nu$ are not functions of $N$; rather, they are tangent vectors to $N$. 

With the notation for ${\rm d}u_t$ and ${\rm d}\nu_t$ in \eqref{sto-reconstruction}, we have the complete reconstruction relation for the stochastic image template $n_t$
\begin{align}
    {\rm d}n_t = {\rm d}u_t \,n_t + {\rm d}\nu_t\, . 
    \label{dn-reconstruction}
\end{align}
Because $n_t\in N$, the concatination ${\rm d}u_t\, n_t$ means the composition ${\rm d} u_t(n_t)$.  
In  \eqref{dn-reconstruction}, the noise in the $u_t$ vector field was introduced in \cite{holm2017stochastic2}, based on the stochastic fluid dynamics model of \cite{holm2015variational}, whereas the noise in the $\nu_t$ field is new.
The first noise term in  \eqref{dn-reconstruction} corresponds to random errors in the reconstruction of the diffeomorphism path from its velocity field, while the second one represents random errors for the reconstruction of the template position from its velocity field. 
In stochastic metamorphosis, the two noise terms will affect the dynamical equations differently.

From these stochastic perturbations of the reconstruction relation, we can formally compute the variations and obtain
\begin{align}
\begin{split}
    \delta u &= {\rm d}\xi_t + [\xi, {\rm d}u_t]\,,\\
    \delta \nu &= {\rm d}\omega + \xi {\rm d} \nu_t - {\rm d}u_t \omega\, .
\end{split}
\label{stoch-vars}
\end{align}
These are convenient expressions, but they introduce the variations as stochastic processes; so they should not be taken at face value without further analysis. We will see in the next section how to re-derive these equations without introducing stochastic variations, by using the Hamilton-Pontryagin principle. Because the results are identical for the two methods, we can proceed formally here by using these variations as we did in the deterministic variational principle to obtain the following stochastic reduced metamorphosis  equations in Euler-Poincar\'e form, 
\begin{align}
  \begin{split}
    {\rm d}\frac{\delta l}{\delta u} &+ \mathrm{ad}^*_{{\rm d}u_t} \frac{\delta l}{\delta u} + \frac{\delta l}{\delta n}\diamond n {\rm d}t +  \frac{\delta l}{\delta \nu}\diamond {\rm d}\nu_t = 0 \,,\\
    {\rm d}\frac{\delta l}{\delta \nu} &+ {\rm d}u_t \star \frac{\delta l}{\delta \nu} - \frac{\delta l}{\delta n} {\rm d}t= 0\, ,
  \end{split}
    \label{sto-metamorpho}
\end{align}
as well as equation \eqref{dn-reconstruction}, all to be compared with the deterministic case in equations \eqref{EP-metamorpho} and \eqref{reconstruction-relation}.

\subsection{Derivation using the Hamilton-Pontryagin principle}\label{HP-derivation}

We now show how to rederive the stochastic metamorphosis equations more transparently,  without introducing stochastic variations \eqref{stoch-vars}. 
For this purpose, we will use the stochastic Hamilton-Pontryagin approach and closely follow the exposition of \cite{holm2017stochastic2}.

The deterministic Hamilton-Pontryagin principle is a variational principle with the following constrained action
\begin{align}
  \begin{split}
    S(u_t,n_t, \dot n_t, \nu_t, g_t, \dot g_t) = \int_0^1 l(u_t,n_t,\nu_t){\rm d}t &+\int_0^1  \langle M_t, (\dot g_t g_t^{-1} - u_t) \rangle {\rm d}t \\
    &+ \int_0^1  \langle \sigma_t, (\dot n_t - \nu_t- u_tn_t) \rangle {\rm d}t \, , 
  \end{split}
\end{align}
where $M_t\in \mathfrak{X}^*(N)$ and $\sigma_t \in T^*N$ are generalised Lagrange multipliers to enforce the constraint of the reconstruction relations. 
Taking free variations for all the variables yields the deterministic reconstruction relation \eqref{reconstruction-relation} and the deterministic Euler-Poincar\'e equation \eqref{EP-metamorpho}. 
We refer to \cite{holm2017stochastic2} for more details of the derivation. The crucial point here is to allow free variations, by introducing constraints into the variational principle, and not in the variations as in the standard Euler-Poincar\'e reduction theory.  An alternative approach would be to use the Clebsch constrained variational method used for fluid dynamics in \cite{holm2015variational}.

In the present context, we enforce the stochastic reconstruction relations \eqref{sto-reconstruction} via the following stochastic Hamilton-Pontryagin principle
\begin{align}
  \begin{split}
  S(u_t,n_t, {\rm d} n_t, \nu_t, g_t, d g_t) = \int_0^1  l(u_t,n_t,\nu_t) &+ \int_0^1 \langle M_t, ({\rm d}g_t g_t^{-1} - {\rm d}u_t ) \rangle\\
  &+ \int_0^1\langle \sigma_t, ({\rm d} n_t - {\rm d}\nu_t- {\rm d}u_t n_t) \rangle  \, , 
  \end{split}
\end{align}
or, more explicitly, upon substituting for ${\rm d}u_t$ and ${\rm d}\nu_t$ from \eqref{sto-reconstruction}, we have
\begin{align}
  \begin{split}
  S(u_t,n_t, {\rm d} n_t, \nu_t, g_t, {\rm d}g_t) 
  &= \int_0^1  l(u_t,n_t,\nu_t) {\rm d}t \\
  &\quad + \int_0^1 \left \langle M_t, {\rm d}g_t g_t^{-1} - u_t {\rm d}t - \sum_{l=1}^{K^u} \sigma^u_l(x) \circ {\rm d}W_t^l  \right \rangle \\
  &\quad + \int_0^1 \left \langle \sigma_t, {\rm d}n_t - \nu_t {\rm d}t - \sum_{k=1}^{K^\nu} \sigma^\nu_k(x) \circ {\rm d}W_t^k \right \rangle \\
  &\quad - \int_0^1 \left \langle \sigma_t, \Big(  u_t {\rm d}t + \sum_{l=1}^{K^u} \sigma^u_l(x) \circ {\rm d}W_t^l\Big) n_t\right  \rangle \, . 
  \end{split}
     \label{sto-HP-principle}
\end{align}
\begin{proposition}
  The stochastic variational principle $\delta S= 0 $ with action \eqref{sto-HP-principle} yields the stochastic Euler-Poincar\'e equation \eqref{sto-metamorpho} with stochastic reconstruction relation \eqref{sto-reconstruction} and \eqref{dn-reconstruction}.
\end{proposition}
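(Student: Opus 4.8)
The plan is to take free variations of the action \eqref{sto-HP-principle} with respect to each independent field $u_t$, $\nu_t$, $n_t$, $g_t$ and the Lagrange multipliers $M_t$, $\sigma_t$, and to collect the resulting stationarity conditions. The key structural observation is that the noise enters only through the Stratonovich differentials ${\rm d}u_t$ and ${\rm d}\nu_t$, which appear \emph{linearly} in the two constraint pairings; consequently the whole computation runs in exact parallel with the deterministic Hamilton--Pontryagin derivation of \eqref{EP-metamorpho}--\eqref{reconstruction-relation} recalled in Section~\ref{det-review}, with $u_t\,{\rm d}t$ and $\nu_t\,{\rm d}t$ replaced by ${\rm d}u_t$ and ${\rm d}\nu_t$ in the appropriate places. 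The only stochastic input is that the Stratonovich integral obeys the ordinary fundamental theorem of calculus and the ordinary Leibniz rule, so that the integrations by parts in the $g_t$- and $n_t$-variations, and the product rule applied to $\diamond$, go through verbatim with no It\^o correction terms.

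First I would dispose of the algebraic variations. Varying $M_t$ returns the constraint ${\rm d}g_tg_t^{-1}={\rm d}u_t$, which together with the definition of ${\rm d}u_t$ in \eqref{sto-reconstruction} is the first reconstruction relation; varying $\sigma_t$ returns ${\rm d}n_t={\rm d}\nu_t+{\rm d}u_t\,n_t$, which is \eqref{dn-reconstruction}; and the second relation in \eqref{sto-reconstruction} then follows by setting $\eta_t:=g_t^{-1}.n_t$ and applying the Leibniz rule to $n_t=g_t.\eta_t$. Varying $\nu_t$, which occurs only in $l(u_t,n_t,\nu_t)\,{\rm d}t$ and in $-\langle\sigma_t,\nu_t\,{\rm d}t\rangle$, gives $\sigma_t=\delta l/\delta\nu$. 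Varying $u_t$, which occurs in $l$, in $-\langle M_t,u_t\,{\rm d}t\rangle$ and in $-\langle\sigma_t,u_tn_t\,{\rm d}t\rangle$, and using the definition \eqref{ad-star-def} of $\diamond$ to write $\langle\sigma_t,u_tn_t\rangle=-\langle\sigma_t\diamond n_t,u_t\rangle$, gives $M_t=\delta l/\delta u+\sigma_t\diamond n_t$.

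It then remains to extract the two dynamical equations from the $g_t$- and $n_t$-variations. Varying $g_t$ through $\xi_t:=\delta g_tg_t^{-1}$, for which $\delta({\rm d}g_tg_t^{-1})={\rm d}\xi_t+[\xi_t,{\rm d}g_tg_t^{-1}]$ exactly as in the deterministic right-trivialized calculus, only the term $\langle M_t,{\rm d}g_tg_t^{-1}\rangle$ responds; integrating the ${\rm d}\xi_t$ contribution by parts (the boundary term being killed by $\xi_0=\xi_1=0$), using $\langle M_t,[\xi_t,{\rm d}u_t]\rangle=-\langle\mathrm{ad}^*_{{\rm d}u_t}M_t,\xi_t\rangle$ and the constraint ${\rm d}g_tg_t^{-1}={\rm d}u_t$, arbitrariness of $\xi_t$ yields ${\rm d}M_t+\mathrm{ad}^*_{{\rm d}u_t}M_t=0$. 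Varying $n_t$ picks up $(\delta l/\delta n)\,{\rm d}t$ from the Lagrangian, $-{\rm d}\sigma_t$ from integrating $\langle\sigma_t,{\rm d}n_t\rangle$ by parts, and $-{\rm d}u_t\star\sigma_t$ from $-\langle\sigma_t,{\rm d}u_t\,n_t\rangle$ via the definition of $\star$ in \eqref{ad-star-def}, so that ${\rm d}\sigma_t+{\rm d}u_t\star\sigma_t-(\delta l/\delta n)\,{\rm d}t=0$, which on $\sigma_t=\delta l/\delta\nu$ is exactly the second equation of \eqref{sto-metamorpho}. Finally, substituting $M_t=\delta l/\delta u+(\delta l/\delta\nu)\diamond n_t$ into ${\rm d}M_t+\mathrm{ad}^*_{{\rm d}u_t}M_t=0$, expanding ${\rm d}\big((\delta l/\delta\nu)\diamond n_t\big)=({\rm d}(\delta l/\delta\nu))\diamond n_t+(\delta l/\delta\nu)\diamond{\rm d}n_t$ by the Stratonovich product rule, and eliminating ${\rm d}(\delta l/\delta\nu)$ and ${\rm d}n_t$ via the second equation of \eqref{sto-metamorpho} and the relation \eqref{dn-reconstruction} already obtained, the remainder collapses to the first equation of \eqref{sto-metamorpho} precisely by the purely algebraic identity $\mathrm{ad}^*_u(\mu\diamond n)=(u\star\mu)\diamond n-\mu\diamond(un)$ relating $\mathrm{ad}^*$, $\diamond$ and $\star$, which holds pointwise in the sample variable because it is the same compatibility relation among the $G$-actions on $N$, $TN$ and their duals that underlies the deterministic theory of \cite{holm2009euler}. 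The main obstacle is the bookkeeping: getting every sign right across $\mathrm{ad}^*$, $\diamond$, $\star$ and the integrations by parts, and checking carefully that the Stratonovich form genuinely renders each Leibniz and integration-by-parts step identical to its deterministic counterpart — which is, in fact, the whole point of the proposition, namely that the noisy reconstruction constraints can be carried through the variational principle without invoking the ambiguous ``stochastic variations'' \eqref{stoch-vars}.
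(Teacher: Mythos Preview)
Your proposal is correct and follows essentially the same route as the paper's own proof: free variations in the Lagrange multipliers recover the stochastic reconstruction relations, variations in $u_t$ and $\nu_t$ identify $M_t$ and $\sigma_t$ with the functional derivatives of $l$, and variations in $n_t$ and $g_t$ (through $\xi_t=\delta g_t g_t^{-1}$) give the auxiliary evolution equations for $\sigma_t$ and $M_t$, which upon substitution produce \eqref{sto-metamorpho}. You in fact supply more detail than the paper on the final substitution step, spelling out the algebraic compatibility identity among $\mathrm{ad}^*$, $\diamond$ and $\star$ that the paper subsumes under ``a few more manipulations'' and a reference to Corollary~3 of \cite{holm2017stochastic2}.
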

\begin{proof}
  The proof is a direct computation by taking free variations. We will show the key steps below. 
  First, the variations with respect to $M_t$ and $\sigma_t$ yield the reconstruction relations \eqref{sto-reconstruction} and \eqref{dn-reconstruction}. 
  Then, the variations with respect to $u_t,n_t$ and $\nu_t$  specify 
  \begin{align}
    \frac{\delta l}{\delta u_t} = M_t+\sigma_t\diamond n_t\, ,  \quad {\rm and} \quad \frac{\delta l}{\delta \nu_t} = \sigma_t\, . 
    \label{M-sig-def}
  \end{align}
  We also have, for the $n_t$ variations,
  \begin{align}
    \frac{\delta l}{\delta n_t} {\rm d}t = {\rm d} \sigma_t + u_t\star \sigma_t {\rm d}t + \sum_{l=1}^{K^u} \sigma^u_l(x)\star \sigma_t \circ {\rm d}W_t^l \, . 
    \label{sigma_eq}
  \end{align}
  Finally, for $\xi = \delta gg^{-1}$ vanishing at the endpoints, we have 
  \begin{align}
    \delta({\rm d}g_t g_t^{-1}) = {\rm d}\xi - \left [u_t {\rm d}t + \sum_{l=1}^{K^u} \sigma^u_l(x) \circ {\rm d}W_t^l, \xi\right] \, . 
  \end{align}
  From this computation, we have the last term in the calculus of variations  which reads
  \begin{align}
    {\rm d}M_t = -\,\mathrm{ad}^*_{u_t} M_t -\, \sum_{l=1}^{K^u} \mathrm{ad}^*_{\sigma^u_l(x)}M_t \circ {\rm d}W_t^l\, . 
    \label{M_eq}
  \end{align}
  Finally, substituting the values of $M_t$ and $\sigma_t$ of \eqref{M-sig-def} in equation \eqref{sigma_eq} and \eqref{M_eq} yields the stochastic metamorphosis equation \eqref{sto-metamorpho} after a few more manipulations (see Corollary 3 of \cite{holm2017stochastic2}). 
\end{proof}

\subsection{Hamiltonian formulation}\label{ham-form}

Provided that the Lagrangian is hyperregular, the stochastic metamorphosis equation \eqref{sto-metamorpho} can be written as a stochastic Hamiltonian equation with Hamiltonian obtained via the reduced Legendre transform, 
\begin{align}
  h(\mu,\sigma,n) = \langle \mu,u\rangle + \langle \sigma, \nu\rangle - l(u,\nu,n)\, , 
\end{align}
in which $\mu$ and $\sigma_t$ are the conjugate variables of $u_t$ and $\nu_t$, respectively. 
The noise is encoded into the stochastic potentials 
\begin{align}
  \Phi_l^u(\mu_t) = \langle \mu_t, \sigma_l^u\rangle_{\mathfrak g\times \mathfrak g^*}\, , \quad\mathrm{and} \quad   \Phi_k^\nu(\sigma_t) = \langle \sigma_t, \sigma^\nu_k\rangle_{TN\times T^*N}\, , 
  \label{sto-potential}
\end{align}
such that the stochastic equation of motion has a Hamiltonian drift term with $h$ and stochastic terms obtained via the same Hamiltonian structure, but with stochastic potentials. 
Notice that the two potentials have a different pairing, one on the Lie algebra of the diffeomorphism group, and the other on the tangent space of the template manifold. 
The Hamiltonian structure is given in \cite{holm2017stochastic2} and we will only display here the Hamiltonian equations
\begin{align}
  \begin{split}
    {\rm d}\mu_t &+ \mathrm{ad}^*_\frac{\delta h}{\delta \mu}u {\rm d}t  +  \sigma \diamond \frac{\delta h}{\delta \sigma } {\rm d}t +  \frac{\delta h}{\delta n}\diamond n {\rm d}t 
    \\&+\sum_l \mathrm{ad}^*_\frac{\delta \Phi_l^u}{\delta \mu}u\circ {\rm d} W_t^l+   \sum_l \sigma_t \diamond \frac{\delta \Phi^\nu_l}{\delta \sigma } \circ {\rm d}W_t^l  = 0 \,,\\
    {\rm d}\sigma_t &+ \frac{\delta h}{\delta \mu} \star \sigma_t  {\rm d}t - \frac{\delta h}{\delta n} + \sum_l \frac{\delta \Phi_l^u}{\delta \mu} \star \sigma_t \circ {\rm d}W_t^l = 0\, .
  \end{split}
    \label{sto-metamorpho-Ham}
\end{align}
In the examples in the next section, we will use this formulation to derive the stochastic equations of motion. Taking the Hamiltonian approach turns out to be more transparent than the Lagrangian description. 

\section{Applications}\label{applications}

Following \cite{holm2009euler}, we explicitly provide the stochastic metamorphosis equations for a few classical examples, including landmarks and images, and leave other applications such as closed planar curves, densities or tensor fields for later works. 

\subsection{Landmarks and peakons}

Consider the case when the template manifold $N$ is the space of $n$ landmarks $\mathbf q = (q_1, \ldots, q_n)\in \Omega^n$ with momenta $\mathbf p = (p_1, \ldots, p_n)\in T_\mathbf{q}\Omega^n \cong \Omega^n$. 
One needs to specify a Lagrangian for this system, and the simplest is
\begin{align}
  l(u,n,\nu) = \frac12 \|u\|_K^2+ \frac{\lambda^2}{2 } \sum_{i=1}^n |p_i|^2\, , 
\end{align}
where the first norm depends on the kernel $K(x)$ and the second norm is the vector norm of the momenta multiplied by a constant $\lambda^2$.  
In this case, we interpret the momenta as the conjugate variables to the template deformation vector field $\nu$ in order to have an equation only in term of the position and momenta of the landmarks. 
The derivation of the landmark equation is rather standard. Hence, we will only show it on the Hamiltonian side. 
We refer, for example, to \cite{holm2009euler} for more details of the deterministic derivation,  or to \cite{arnaudon2017geometric} and \cite{holm2016variational} for discussions of the stochastic landmark dynamics. 

Recall that the landmark Hamiltonian is
\begin{align}
  h_K(\mathbf p,\mathbf q) = \frac{1}{2}\sum_{ij} p_i\cdot  p_j K(q_i-q_j)\, , 
\end{align}
and the metamorphosis Hamiltonian is thus
\begin{align}
  h(q_i,p_i) = h_K(\mathbf q,\mathbf p) + \frac{\lambda^2}{2} \sum_{i=1}^n |p_i|^2\, . 
\end{align}
The stochastic potentials \eqref{sto-potential} become in this case 
\begin{align}
  \Phi_l^u(\mathbf q,\mathbf p) =  \sum_i p_i\cdot \sigma_l^u(q_i)\qquad \mathrm{and}\qquad  \Phi_i^\nu (\mathbf p) = p_i\cdot \sigma_i^\nu \, . 
\end{align}
Notice that the stochastic potential $\Phi^\nu$ is described by a fixed vector, where $\sigma^\nu_i$ is the amplitude of the noise for the landmark $i$. However, for the stochastic potential $\Phi^u$, we have to specify space (or $\mathbf q$) dependent functions $\sigma_l^u(\mathbf q)$. 
This simple form comes from the fact that we used a discrete set of points and $\nu= \mathbf p$ for the template deformation, and the summation over $k$ becomes a summation over the landmark index. In addition, a sum of two Wiener process is another Wiener process with the sum of the amplitude (if it is additive and in It\^o form). 
From this observation, one can see that the general equation $\Phi^\nu_k (\mathbf p) = \sum_i p_i \cdot \sigma_k^\nu$ is equivalent to a change of amplitudes $\sigma_k^\nu$ and $i=k$.  

We compute the stochastic Hamiltonian equations for landmarks to arrive at
\begin{align}
  \begin{split}
    {\rm d}q_i &= \frac{\partial h_K}{\partial p_i} {\rm d}t + \sum_l \sigma^u_l \circ {\rm d} W^l_t+ \lambda^2 p_i {\rm d}t + \sigma_i^\nu {\rm d}W^i_t \,, \\
    {\rm d}p_i &= - \frac{\partial h_K}{\partial q_i} {\rm d}t   + \sum_l \partial_{q_i} (p_i\cdot \sigma^u_l)\circ {\rm d}W^l_t\, , 
  \end{split}
\end{align}
in which we can use the It\^o integral for the $\nu$-noise, as it is additive. 

Notice that setting $\lambda=0$ recovers the standard landmark dynamics, but with an additive noise in the position equation. This is different from the conventional physical perspective, in which additive noise often appears in the momentum equation, as in \cite{TrVi2012,Vialard2013extension,marsland2017langevin}. 

\subsection{Images}

The present stochastic metamorphosis framework can be directly applied to images, by taking the template space $N$ to be the space of smooth functions from the domain $\Omega\subset \mathbb R^2$ to $\mathbb R$. 
We set $u_t\in \mathfrak X(\Omega)$ the deformation vector field and $\rho \in TN\cong N$ the template vector field. 
As before, the Lagrangian must have two parts, and the simplest non-trivial one is the sum of kinetic energies written as 
\begin{align}
  l(u,n,\nu) = \frac12 \|u_t\|_K^2+ \frac{\lambda^2}{2 } |\rho_t|_{L^2}^2\, , 
\end{align}
where the first norm depends on the kernel $K$ and the second norm is the standard $L^2$ norm over $\Omega$. 
By choosing a $L^2$ norm we can identify $\rho_t $ with its dual in the case $\lambda=1$.
We will thus not distinguish between $\sigma_t$ and $\nu_t$ of the general framework. 

Thus, as before, we use the Hamiltonian formulation of the stochastic metamorphosis equations with the stochastic potentials, 
\begin{align}
  \Phi^u_l(m_t) =\int_\Omega  \langle m_t(x), \sigma_l^u(x)\rangle dx\quad \mathrm{and} \quad \Phi^\nu_k(\sigma_t) = \int_\Omega \langle \rho_t(x), \sigma_l^\nu(x) \rangle dx\, . 
\end{align}
Notice that in this case, both $\sigma_l^u$ and $\sigma_l^\nu$ are functions of the domain $\Omega$, and they encode spatial correlation structure of the stochastic perturbations.

Then, because the Hamiltonian structure has three sorts of terms, the $\mathrm{ad}^*$, the $\diamond$ and the $\star$ terms defined in equation \eqref{ad-star-def}, which in this case are
\begin{align*}
  \mathrm{ad}^*_{u_t} m_t &= (u_t\cdot \nabla) m_t  + (m_t\cdot \nabla ) u_t + \mathrm{div}(u_t) m_t \,,\\
  \sigma_t\diamond \nu_t &= \sigma_t \cdot \nabla \nu_t \,,\\
  u_t\star \sigma_t & =\nabla\cdot (\sigma_t u_t)\, , 
\end{align*}
we arrive at the following set of stochastic PDEs (for any $\lambda$)
\begin{align}
  \begin{split}
    {\rm d}m_t &+ \mathrm{ad}^*_{u_t} m_t {\rm d}t + \sum_l \mathrm{ad}^*_{\sigma_l^u}m_t \circ {\rm d}W_t^l = \lambda^2 \rho_t\cdot \nabla \rho_t {\rm d}t + \sum_k \rho_t \cdot \nabla\sigma_k^\nu \circ {\rm d}W_t^k
  \,,\\
  {\rm d}\rho_t &+ \nabla\cdot ( \rho_t u_t) {\rm d}t + \sum_l \nabla\cdot(\rho \sigma_l^u ) \circ {\rm d}W_t^l= 0 \, . 
  \end{split}
\end{align}
Another important equation is the reconstruction relation \eqref{dn-reconstruction}, which now reads
\begin{align}
  {\rm d}g_t = u_t(g_t) {\rm d}t + \sum_l \sigma_l^u(g_t) \circ {\rm d}W_t^l + \rho_t {\rm d}t + \sum_k \sigma_k^\nu \circ {\rm d}W_t^k\, . 
\end{align}

Notice that if we set $\lambda=1$, the effect of the density, or template motion on the momentum $m$ only appears via the noise term, similarly to the landmark case. 

In the one dimensional case, the metamorphosis equation is known to reduce to the so-called CH2 system, which is equation coupling the Camassa-Holm equation with a density advection equation for $\rho_t= \nu_t$. 
We refer to \cite{holm2009euler,chen2006two} and references therein for more details about this equation and its complete integrability in the deterministic case.
A similar reduction holds for both stochastic deformations, and we have the following stochastic CH2 equation
\begin{align}
  \begin{split}
    {\rm d}m &+ (u\partial_x m + 2m\partial_x  u) {\rm d}t \\&= - \rho \partial_x \rho {\rm d}t 
    - \sum_k \rho \partial_x \sigma^\nu_{k} \circ {\rm d}W_t^k - \sum_l\left (  \sigma^u_l \partial_x m + \sum_l 2m \partial_x \sigma^u_{l}\right) \circ {\rm d}W_t^l
    \,,\\
    {\rm d}\rho &+ \partial_x (\rho u ) {\rm d}t +\partial_x (\rho \sigma_l^u)\circ {\rm d}W_t^l = 0 \, . 
  \end{split}
\end{align}

Compared to the landmark example, the noise associated to the template dynamics is described by a set of functions of the image, not a set of fixed vectors. 
The difference between the nature of these two types of noise is thus less apparent, apart from how they appear in the equation. 

\section{Perspectives}\label{perspective}
\subsection{Computational Anatomy}

Estimation of population atlases and
longitudinal analysis of anatomical changes caused by disease progression constitute integral parts of computational
anatomy \cite{younes_evolutions_2009}. The relation between these problems and the stochastic metamorphosis model presented here can be illustrated by the analysis of longitudinal brain MR-image data of patients suffering from Alzheimer's disease. The data manifold $N$ is here a vector space of images as described above with $\Omega\subseteq\mathbb R^3$.
\begin{figure}[!th]
\centering
  \vspace{.4cm}
\mbox{
\hspace{-.8cm}
\def\svgwidth{0.34\columnwidth}
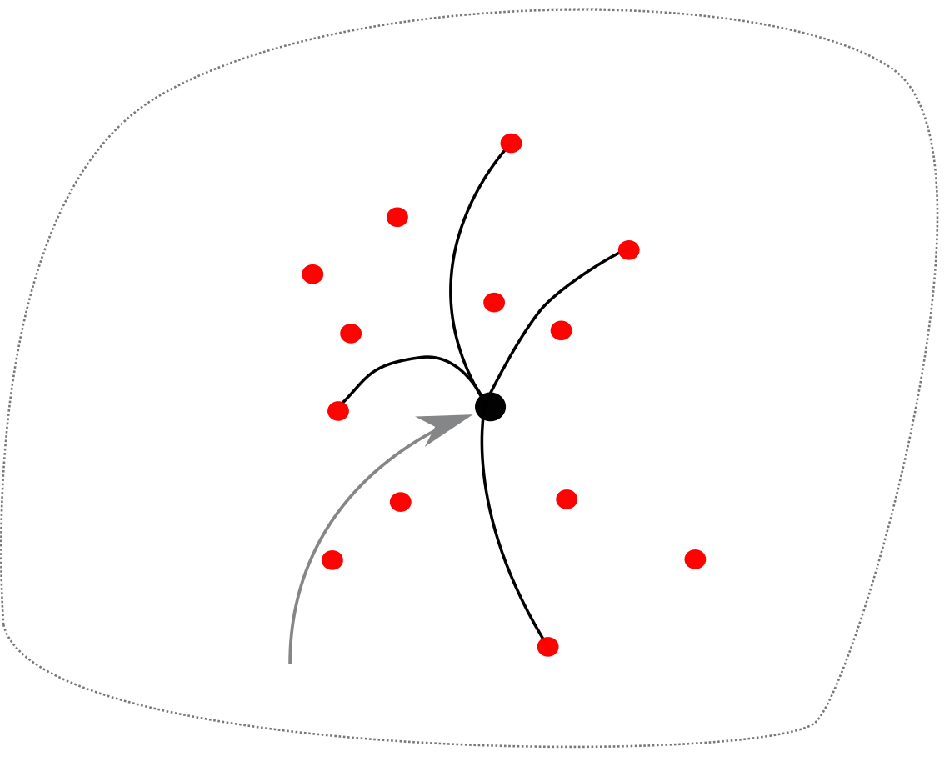
\hspace{-.0cm}
\def\svgwidth{0.43\columnwidth}
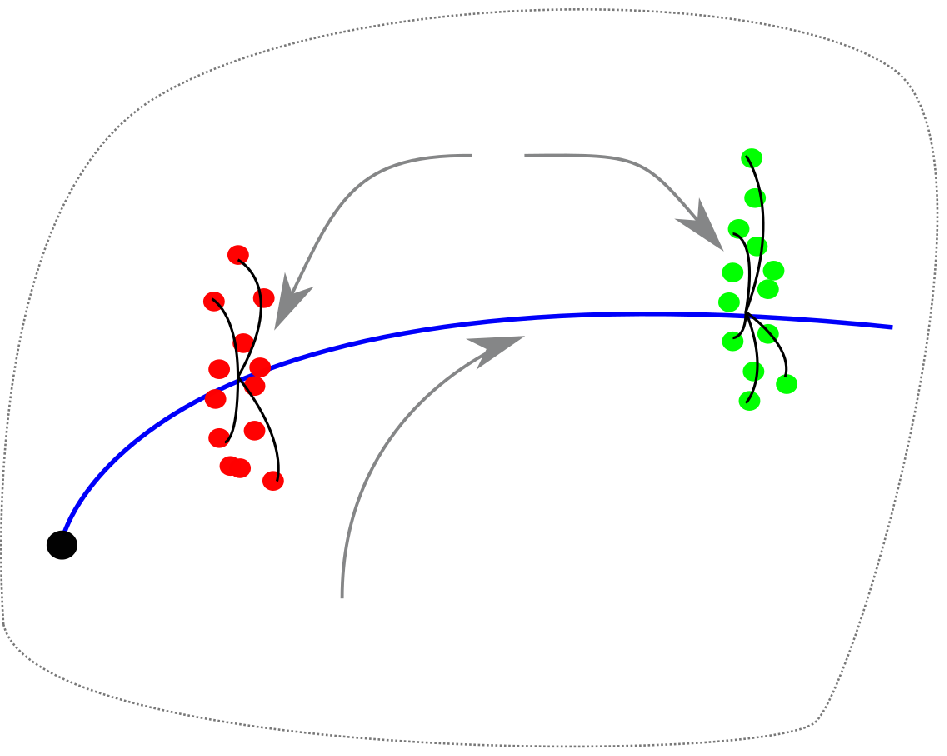
\hspace{-1.5cm}
\def\svgwidth{0.39\columnwidth}
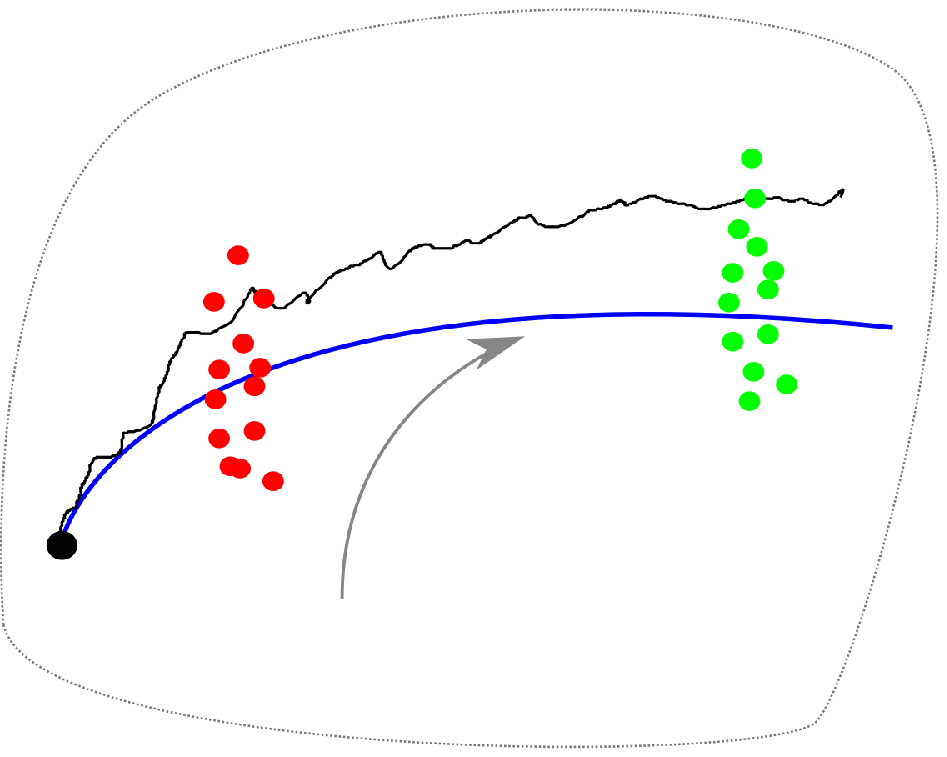
}
\caption{(left) Template estimation in the form \eqref{eq:LDDMMtemplate} aims at
finding a single descriptor $\eta$ for the population average of the observed
shapes $n^1,\ldots,n^k$ (red dots) in the non-linear shape space $N$. The variational 
principle \eqref{eq:LDDMMtemplate}
corresponds to assuming $n^i$ arise from geodesic perturbations of $\eta$.
(center) Geodesic regression models a population trend as a geodesic $n_t$.
Observations at different time points ($n_{t_1}^i$ red, $n_{t_2}^i$ green) arise
as perturbations of the points $n_{t_1}$ and $n_{t_2}$ by random geodesics. (right) 
Stochastic metamorphosis models the evolution of the population trend
$n_t$ deterministically while observations $n_{t_j}^i=n_{t_j}(\omega^i)$ 
appear from individual noise realizations $\omega^i$. The perturbations
are time continuous and apply to each case $i$ individually making the model
natural for modelling longitudinal evolutions with noise.
}
\label{fig:relations_comp_anatomy}
\end{figure}

Focusing first on template estimation, in medical imaging commonly denoted 
atlas estimation, the aim is to find a population average of data assumed observed
at a fixed time point. In the literature, this is for example 
pursued by minimizing the total sum of the regularized LDDMM energies 
of deterministic geodesic trajectories that deform the atlas to match the
observed data \cite{joshi_unbiased_2004}. For $k$ data points $n^1,\ldots,n^k$ and 
with data matching term $S:N\times N\rightarrow\mathbb R$, the template $\eta$
is then estimated by joint minimization of
\begin{equation}
  \min_{(\eta,v_t^1,\ldots,v_t^k)}
  \sum_{i=1}^k
  \int_0^1\|v_t^i\|^2{\rm d}t
  +
  S(\phi_T^i.\eta,n^i)
  \,,
  \label{eq:LDDMMtemplate}
\end{equation}
where the deformations $\phi_T^i$ each are endpoints of the integral of the vector fields $v_t^i$ on an interval $[0,T]$. 

A different approach to atlas estimation is to perform inference in 
statistical models
where observations are assumed random perturbations of a template
and inference of the template is performed via maximum-likelihood or
maximum-a-posteriori estimation. This approach is pursued, for example, in a
\cite{allassonniere_towards_2007,zhang_bayesian_2013,pai_statistical_2016}. See also the discussion below.

Longitudinal analysis aims at capturing the average time evolution of the brain shape caused by the disease
\cite{muralidharan_sasaki_2012,niethammer_geodesic_2011}. A common approach here is to estimate a general deterministic trend that is perturbed by noise at discretely observed time points in order to describe the observed images \cite{fletcher_probabilistic_2016}. 
For example, the noise can take the form of random initial velocity vectors for geodesics emanating from the deterministic trajectory.

The stochastic metamorphosis framework proposed here combines deterministic longitudinal evolution of
the template in both shape, represented by the deformations $g_t$, and in the
template image, $n_t=g_t.\eta_t$. We can assume longitudinal observations $n_{t_j}^i$, $i=1,\ldots,k$,
$j=1,\ldots,t_l$ at $l$ time points are realizations of the stochastic model with
time-continuous noise process drawn for each subject $i$.
The stochastic perturbations are thus tied to each subject affecting the
dynamics simultaneously with the evolution of the deterministic flow. The 
relation between this model, geodesic regression models, and atlas estimation is
illustrated in Figure~\ref{fig:relations_comp_anatomy}.

Because of the randomness, algorithms for inference of the template
$\eta$ and its evolution $n_t=g_t.\eta_t$ from data can naturally be formulated 
by matching statistics of the data, e.g. by matching moments or by
maximum-likelihood as done for the landmark case of stochastic EPDiff equations
in \cite{arnaudon2017geometric}. Development of such inference schemes constitutes
natural future research directions.

\subsection{Phase and Amplitude in Functional Data Analysis}
While images exhibit variations in both intensity and shape of the
image domain, signals in functional
data analysis often exhibit combined variation in amplitude and phase.
For a signal $f:I\rightarrow N$ defined on an interval $I$, amplitude variations refer to 
variations of the values $f(s)$ in $N$ for each fixed $s\in I$ while phase variation
covers changes in the parametrization of the domain $I$. This is illustrated
with $N=\mathbb R$ in Figure~\ref{fig:phase_amp}. An example of such combined phase and amplitude variations is provided in the growth curves of children and young adults; in which phase variation
is connected to the absolute height of the subject while phase variation
arise from growth and growth spurts occurring at different ages for different children.
\begin{figure}
    \centering
    \subfigure[tempate signal $\eta$]{\includegraphics[scale=0.28,trim=40 0 40 0,clip=true]{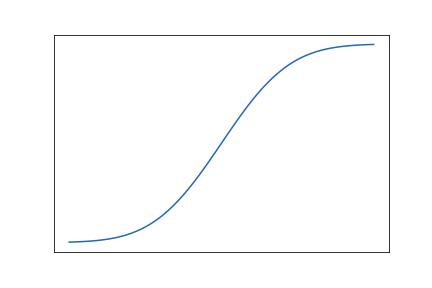}}
    \subfigure[phase variation ($\phi$)]{\includegraphics[scale=0.28,trim=40 0 40 0,clip=true]{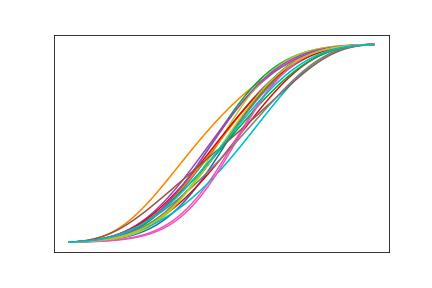}}
    \subfigure[amplitude variation ($\nu$)]{\includegraphics[scale=0.28,trim=40 0 40 0,clip=true]{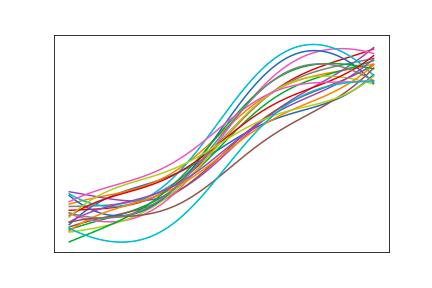}}
    \subfigure[phase and amplitude variation]{\includegraphics[scale=0.28,trim=40 0 40 0,clip=true]{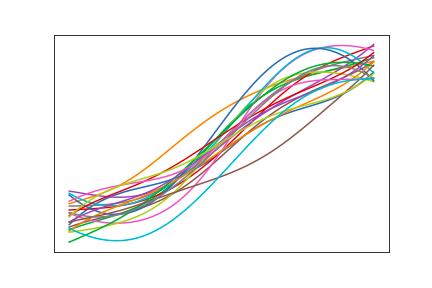}}
    \caption{A template signal (a) can be perturbed by 
      (b) variation in phase, in \eqref{eq:phase_amplitude} denoted $\phi$;
      (c) variation in amplitude, $\nu$ in \eqref{eq:phase_amplitude};
      (d) phase and amplitude simultaneously.
    }
    \label{fig:phase_amp}
\end{figure}

Recent literature
covers multiple approaches for identifying, separating and performing inference in
situations with combined phase and amplitude variation
\cite{raket2014nonlinear,marron_functional_2015,tucker_generative_2013}.
One example of a generative model in this settings is the mixed-effects model
\cite{raket2014nonlinear,kuhnel_most_2017}
\begin{equation}
  f(s)
  = 
  \eta(\phi^{-1}(s))
  +
  \nu(s)
  +
  \epsilon
  \ ,\ s\in I
  \,,
  \label{eq:phase_amplitude}
\end{equation}
where the average signal $\eta$ is deformed in phase by the action
$\phi.\eta=\eta\circ\phi^{-1}$ of a deformation $\phi$ of the interval $I$, and
in amplitude by the additive term $\nu$. Here $\eta$ is considered a fixed, non-random
effect while both $\phi$ and $\nu$ are random. Illustrated with the growth curve
case above, $\eta$ models the population average growth curve for each age $s$, while
$\phi$ controls the timing of the growth process for the individual children and $\nu$
the absolute height difference to the population average.
One observes that the model
\eqref{eq:phase_amplitude} is non-linear, because of the coupling between $\phi$
and $\eta$. In addition, a model for the deformations $\phi$ is needed, and the
randomness appearing in both $\phi$ and $\nu$ must be specified.

Whereas the LDDMM model is widely used in image analysis, this framework has not yet seen many applications for modelling deformations in functional data analysis, such as the phase variation appearing in \eqref{eq:phase_amplitude}.
Instead, works such as
\cite{raket2014nonlinear} use a small-deformation model $\phi(s)=s+v(s)$ with
random vector field $v$ modelling displacements on $I$. 
On the other hand, large-deformation flow models such as LDDMM
traditionally have not integrated random variation directly into the dynamics. 
Natural families of probability
distributions and generative models taking values in non-linear spaces such as
deformation spaces are
generally non-trivial to construct. However, the model proposed in this paper 
achieves exactly that.

A direct metamorphosis equivalent of the mixed-effects model \eqref{eq:phase_amplitude}
has $\eta=\eta_0$ the population average $\eta$, sets $u_0=\nu_0=0$ and encodes
the random effects $\phi$ and $\nu$ in \eqref{eq:phase_amplitude} in the
stochastic increments $du_t$ and $d\nu_t$. The action of $g_t$
on the signal is the right action
$g_t.f=f\circ g_t^{-1}$ as in \eqref{eq:phase_amplitude}. Now $d\nu_t$ models pure amplitude variation, $du_t$
phase variation, and the combined stochastic evolution of the signal is
$df_t=dn_t$.
We then assume the observed signal is $f=f_T$ for a fixed end time $T$ of the
stochastic process. Spatial correlation in both the deformation increments $du_t$ and the
amplitude increments $d\nu_t$ is encoded in the fields $\sigma_l^u$ and
$\sigma_k^\nu$ respectively.

In the above model, the template is stationary in time when disregarding the stochasticity. However, allowing
non-zero initial momenta $u_0$ and $\nu_0$ in both phase and amplitude allows
the template to vary with time and thereby gives 
a non-linear generalization of a standard multivariate regression model with one latent
variable for phase and one for amplitude. This in particular allows
modelling of trends over populations where subjects are
affected by both the population trend and individual stochastic perturbations.

\subsection{Statistical nonlinear modelling}
  It may initially seem overly complicated to use the metamorphosis framework for a simple regression model. However, statistical models that in linear space seem
  completely standard are often inherently difficult to generalize to non-linear
  spaces. In general, the lack of vector space structure makes distributions and 
  generative models hard to specify, see e.g. 
  \cite{sommer_anisotropic_2015,sommer_modelling_2017} for
  examples of the geometric complexities of generalizing the Euclidean normal distribution. 

  In Euclidean space, random vectors can model random perturbations. In non-linear spaces, the lack of vector space structure prevents this and random perturbations are often most naturally expressed with sequences of infinitesimal steps. Vectors are thus
  replaced with tangent bundle valued sequences that, when integrated over time, 
  give rise to stochastic flows. When modelling both deterministic and random variations, stochasticity generally couples non-trivially with the deterministic evolution. In addition, 
  perturbations and correlation structure must be specified with respect to a
  frame of reference. While Euclidean space provides a global coordinate system allowing this, a model of transport must be specified in non-linear spaces.
  The stochastic metamorphosis model is an example of a model coupling
  deterministic and stochastic evolution and using right-invariance to provide
  reference frames for the perturbations and correlation structure.
  An example of a related but different approach is
  \cite{kuhnel_stochastic_2017} where parallel transport is used to link covariance between tangent spaces.

\subsection*{Acknowledgements}
{\small AA acknowledges funding from the EPSRC through award EP/N014529/1 funding the EPSRC Centre for Mathematics of Precision Healthcare.
AA and DH are partially supported by the European Research Council Advanced Grant 267382 FCCA held by DH. DH is also grateful for partial support from EPSRC Grant EP/N023781/1.
SS is partially supported by the CSGB Centre for Stochastic Geometry and Advanced
Bioimaging funded by a grant from the Villum foundation. 
The authors would like to thank the Isaac Newton Institute for Mathematical Sciences for their support and hospitality during the programme {\it Growth form and self-organisation} when this paper was finished. This work is supported by EPSRC Grant Number EP/K032208/1. 

}

\bibliography{biblio.bib}

\end{document}